
\documentclass{article}

\usepackage{microtype}
\usepackage{graphicx}
\usepackage{subfigure}
\usepackage{booktabs} 
\usepackage{tabularx}
\usepackage{amsmath} 
\usepackage{multirow}

\usepackage{hyperref}



\usepackage[accepted]{icml2024}

\usepackage{amsmath}
\usepackage{amssymb}
\usepackage{mathtools}
\usepackage{amsthm}

\usepackage[capitalize,noabbrev]{cleveref}

\theoremstyle{plain}
\newtheorem{theorem}{Theorem}[section]

\theoremstyle{definition}

\theoremstyle{remark}

\usepackage[textsize=tiny]{todonotes}

\icmltitlerunning{ }

\begin{document}

\twocolumn[
\icmltitle{Neural SDEs as a Unified Approach to Continuous-Domain Sequence Modeling\\}




\icmlsetsymbol{equal}{*}

\begin{icmlauthorlist}
\icmlauthor{Macheng Shen}{sqz}
\icmlauthor{Chen Cheng}{Upenn}
\end{icmlauthorlist}

\icmlaffiliation{sqz}{Shanghai QiZhi Institute, Shanghai, China}
\icmlaffiliation{Upenn}{University of Pennsylvania, PA, USA, work done during internship at Shanghai QiZhi Institute}

\icmlcorrespondingauthor{Macheng Shen}{shenmacheng@sqz.ac.cn}

\icmlkeywords{Machine Learning, ICML}

\vskip 0.3in
]



\printAffiliationsAndNotice{}  


\begin{abstract}
Inspired by the ubiquitous use of differential equations to model continuous dynamics across diverse scientific and engineering domains, we propose a novel and intuitive approach to continuous sequence modeling. Our method interprets time-series data as \textit{discrete samples from an underlying continuous dynamical system}, and models its time evolution using Neural Stochastic Differential Equation (Neural SDE), where both the flow (drift) and diffusion terms are parameterized by neural networks. We derive a principled maximum likelihood objective and a \textit{simulation-free} scheme for efficient training of our Neural SDE model. We demonstrate the versatility of our approach through experiments on sequence modeling tasks across both embodied and generative AI \footnote{Codes and demos to be released at our project website: \url{https://neuralsde.github.io/}.}. Notably, to the best of our knowledge, this is the first work to show that SDE-based continuous-time modeling also excels in such complex scenarios, and we hope that our work opens up new avenues for research of SDE models in high-dimensional and temporally intricate domains. 

\end{abstract}

\section{Introduction}

Sequence modeling is a fundamental task in artificial intelligence, underpinning a wide range of applications from natural language processing to time-series analysis \citep{sutskever2014sequence, graves2012long, graves2013generating}. In recent years, advances in sequence modeling have led to breakthroughs across multiple domains. Models like Generative Pre-trained Transformers (GPT) \citep{radford2019language, brown2020language} have revolutionized language generation, while diffusion models have achieved state-of-the-art results in areas such as image and video generation \citep{ho2020denoising, song2019generative, song2021scorebased, ho2022video}. These successes underscore the importance of effective sequence modeling techniques in the development of advanced AI systems, both for discrete and continuous data.
\begin{figure}[t]
    \centering
    \includegraphics[width=0.5\textwidth]{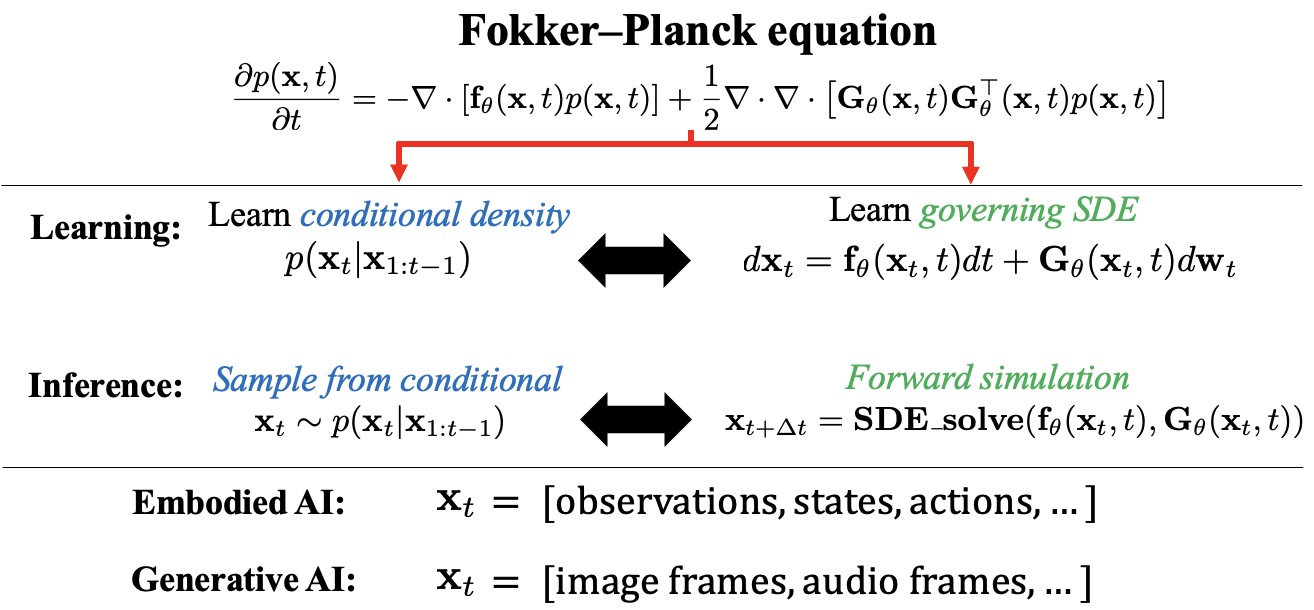}
    \caption{Our approach introduces a \textbf{new paradigm for continuous-domain sequence modeling by representing dynamics with SDEs}, instead of directly modeling conditional densities. The Fokker-Planck equation provides the theoretical link between these two paradigms, describing the time evolution of the probability density. This framework unifies embodied and generative AI under the same continuous sequence modeling paradigm.}
    \label{fig:overview}
\end{figure}
Auto-regressive models have been the dominant approach for sequence modeling of discrete variables. This method has proven highly effective for tasks like language generation, where data is naturally discrete and sequential. However, extending auto-regressive models to continuous data is less straightforward. Continuous data often represent smooth, time-evolving dynamics that are not naturally segmented into discrete tokens. Tokenizing continuous data can introduce quantization errors and obscure important information such as the closeness between states induced by the distance metric of the continuous state space \citep{argall2009survey, li2024autoregressive}. To model such data effectively, it’s essential to preserve their inherent continuity and stochasticity.

Recent methods like diffusion models \citep{ho2020denoising, song2021scorebased} and flow matching frameworks \citep{lipman2022flow} have made significant progress in handling continuous variables by leveraging stochastic processes. These approaches transform a simple initial distribution into a complex target distribution through a series of learned transformations, effectively capturing the data distribution in continuous spaces. However, in the context of sequence modeling, this iterative approach can be less natural and computationally intensive, which often involves a high transport cost and requires many iterative steps to produce satisfactory results \citep{tong2023improving, kornilov2024optimal}. In contrast, a more intuitive and efficient strategy is to model the sequence by directly learning the transitions between consecutive states, reflecting the inherent temporal continuity of the data.

These limitations highlight the need for sequence models designed specifically for continuous data, capable of capturing both the deterministic trends and stochastic fluctuations of the underlying dynamics without relying on extensive iterative transformations.

\textbf{Differential Equations as a Unifying Framework.}
In many real-world applications, continuous dynamical systems are naturally described by ordinary or stochastic differential equations. From physical processes like fluid flow and orbital mechanics \citep{gianfelici2008numerical, harier2000geometric} to robotic control and autonomous systems \citep{zhu2023nonlinear}, the underlying dynamics are often given—or well approximated—by state-space differential equations that evolve continuously in time. 



By leveraging Neural Stochastic Differential Equations (Neural SDEs) \citep{kong2020sdenet, kidger2021efficient}, we propose a method that models the time evolution of continuous systems directly. This approach maintains the intrinsic continuity of the data and provides a more natural representation of continuous-time processes compared to traditional discrete-time or tokenization-based methods.

The remainder of this paper is organized as follows: In Section 2, we review related work on sequence modeling and neural differential equations, highlighting the gaps our approach aims to fill. Section 3 introduces our proposed Neural SDE framework for continuous sequence modeling, including the derivation of the maximum likelihood objective and discussion of its invariance properties. In Section 4, we present experimental results on various tasks such as trajectory generation, imitation learning, and video prediction, demonstrating the versatility and flexibility of our method. Finally, Section 5 concludes the paper with a discussion on the implications of our work and potential directions for future research.

\section{Backgrounds}

In this section, we provide an overview of foundational concepts relevant to our approach. We begin with diffusion models and their extension to continuous stochastic differential equations (SDEs). We then discuss the flow matching and stochastic interpolant frameworks, which are instrumental in formulating our method. Finally, we introduce Neural SDEs and highlight the challenges associated with their training.
\vspace{-0.5em}
\subsection{Diffusion Models and Their SDE Formulation}

Diffusion models have become a cornerstone in generative modeling, achieving remarkable success in generating high-fidelity images and other data types \citep{ho2020denoising, song2021scorebased}. At their core, these models involve a forward process that gradually adds noise to the data and a reverse process that reconstructs the data by denoising.

\subsubsection{Discrete Diffusion Models}

In the discrete setting, the forward diffusion process is defined over discrete time steps, where Gaussian noise is incrementally added to the data. The reverse process involves learning to denoise the corrupted data to recover the original data distribution. This is typically achieved by training a neural network to predict the noise added at each step, using a mean squared error loss between the predicted and true noise.

\subsubsection{Continuous SDE Formulation}

The continuous-time formulation of diffusion models represents the forward and reverse processes as solutions to SDEs.

The forward diffusion process is defined by the SDE:

\begin{equation}
\label{eq:forward_sde}
d\mathbf{x}_t = \mathbf{f}(\mathbf{x}_t, t) dt + g(t) d\mathbf{w}_t,
\end{equation}

where $\mathbf{x}_t \in \mathbb{R}^d$ is the data at time $t \in [0, T]$, $\mathbf{f}(\mathbf{x}_t, t)$ is the flow coefficient, $g(t)$ is the diffusion coefficient, and $\mathbf{w}_t$ is a standard Wiener process.

In the case of the Variance Preserving (VP) SDE \citep{song2021scorebased}, which corresponds to the discrete diffusion model with a variance schedule $\beta_t$, the flow and diffusion coefficients are pre-defined functions given by:

\begin{equation}
    \mathbf{f}(\mathbf{x}_t, t) = -\tfrac{1}{2} \beta(t) \mathbf{x}_t, \
g(t) = \sqrt{\beta(t)}.
\label{eq:diffusion_coefficient_schedule}
\end{equation}

Here, $\beta(t)$ is a continuous-time version of the noise schedule from the discrete model.

\subsubsection{Reverse SDE and Score Function}

To generate data samples, we consider the reverse-time SDE that has the same marginal distribution as the forward SDE:

\begin{equation}
\label{eq:reverse_sde}
d\mathbf{x}_t = \left[ \mathbf{f}(\mathbf{x}_t, t) - g^2(t) \nabla_{\mathbf{x}_t} \log q_t(\mathbf{x}_t) \right] dt + g(t) d\bar{\mathbf{w}}_t,
\end{equation}

where $q_t(\mathbf{x}_t)$ is the marginal distribution of $\mathbf{x}_t$ at time $t$, and $\bar{\mathbf{w}}_t$ is a reverse-time Wiener process. The term $\nabla_{\mathbf{x}_t} \log q_t(\mathbf{x}_t)$ is the \emph{score function}, representing the gradient of the log probability density at time $t$.

In practice, since $q_t(\mathbf{x}_t)$ is unknown, the score function is approximated by a neural network $\mathbf{s}_{\theta}(\mathbf{x}_t, t)$, trained to minimize the score-matching objective which corresponds to the denoising objective in the discrete formulation:

\begin{equation}
\label{eq:score_matching_loss}
\mathcal{L}_{\text{score}} = \mathbb{E}_{t} \left[ \lambda(t)  \mathbb{E}_{\mathbf{x}0} \left[ \| \mathbf{s}_{\theta}(\mathbf{x}_t, t) - \nabla_{\mathbf{x}_t} \log q(\mathbf{x}_t \mid \mathbf{x}_0) \|^2 \right] \right],
\end{equation}

where $\mathbf{x}_0$ is the original data sample, $\mathbf{x}_t$ is obtained by solving the forward SDE starting from $\mathbf{x}_0$, $\lambda(t)$ is a weighting function, and $q(\mathbf{x}_t \mid \mathbf{x}_0)$ is the transition kernel of the forward SDE. In the case of linear Gaussian transition as in Eq.~(\ref{eq:diffusion_coefficient_schedule}), $\nabla_{\mathbf{x}_t} \log q(\mathbf{x}_t \mid \mathbf{x}_0)$ has a simple closed-form solution which is proportional to the sampled noise.

\subsection{Flow Matching and Stochastic Interpolants}

Instead of learning SDEs, Flow matching and Stochastic Interpolants provide frameworks for learning a diffusion-free ordinary differential equation (ODE) for generative modeling.

\subsubsection{Interpolative Approach}

Both flow matching \citep{lipman2022flow} and stochastic interpolants \citep{albergo2022building} rely on constructing an \emph{interpolant} between a simple base distribution $p_0(\mathbf{x})$ and a target distribution $p_1(\mathbf{x})$. The interpolant is defined as a time-dependent process $\mathbf{x}_t$ that smoothly transitions from $\mathbf{x}_0 \sim p_0$ to $\mathbf{x}_1 \sim p_1$ as $t$ goes from $0$ to $1$.

An example of a stochastic interpolant is:

\begin{equation}
\label{eq:stochastic_interpolant}
\mathbf{x}_t = \alpha(t) \mathbf{x}_0 + \beta(t) \mathbf{x}_1 + \sigma(t) \boldsymbol{\xi},
\end{equation}

where $\alpha(t)$ and $\beta(t)$ are deterministic scalar functions satisfying $\alpha(0) = 1$, $\alpha(1) = 0$, $\beta(0) = 0$, $\beta(1) = 1$, $\sigma(t)$ controls the magnitude of the stochastic component, and $\boldsymbol{\xi} \sim \mathcal{N}(\mathbf{0}, \mathbf{I})$ is standard Gaussian noise.

\subsubsection{Simulation-Free Training Objective}

The key idea is to learn a vector field $\mathbf{v}_\theta(\mathbf{x}_t, t)$ such that the time derivative of the interpolant matches the vector field:

\begin{equation}
\label{eq:flow_matching_condition}
d\mathbf{x}_t = \mathbf{v}_{\theta}(\mathbf{x}_t, t) dt.
\end{equation}

The training objective minimizes the expected squared difference between the model’s vector field and the true time derivative of the interpolant:

\begin{equation}
\label{eq:flow_matching_loss}
\mathcal{L}_{\text{FM}} = \mathbb{E}_{t} \left[ \mathbb{E}_{\mathbf{x}_0, \mathbf{x}_1, \boldsymbol{\xi}} \left[ \| \mathbf{v}_{\theta}(\mathbf{x}_t, t) - \frac{d\mathbf{x}_t}{dt} \|^2 \right] \right].
\end{equation}

Because $\mathbf{x}_t$ and $\frac{d\mathbf{x}_t}{dt}$ can be computed analytically from \eqref{eq:stochastic_interpolant}, training does not require simulating the dynamics of $\mathbf{x}_t$.

\subsection{Neural Stochastic Differential Equations}

Neural Stochastic Differential Equations (Neural SDEs) \citep{li2020scalable, kidger2021neural} extend Neural ODEs \citep{chen2018neural} by incorporating stochastic components into the system dynamics. A Neural SDE models the evolution of a stochastic process as:

\begin{equation}
d\mathbf{x}_t = \mathbf{f}_{\theta}(\mathbf{x}_t, t) dt + \mathbf{G}_{\theta}(\mathbf{x}_t, t) d\mathbf{w}_t,
\label{eq:general_sde}
\end{equation}

where $\mathbf{f}_{\theta}$ and $\mathbf{G}_{\theta}$  are neural networks parameterized by $\theta$ .

Training Neural SDEs involves computing gradients of a loss function with respect to the parameters  $\theta$ . A common approach is the \emph{adjoint sensitivity method} \citep{li2020scalable_gradient}, which computes these gradients by solving an adjoint SDE backward in time alongside the forward simulation. While this method is memory-efficient, it poses computational challenges:

\begin{itemize}
\item \textbf{Computational Overhead}: Simulating both the forward and backward SDEs increases computational cost.
\item \textbf{Numerical Stability}: Backward integration through stochasticity can introduce numerical instability, as stochastic differential equations are less straightforward to reverse due to their inherent randomness.
\end{itemize}

These challenges motivate the exploration of alternative training methods that can efficiently handle Neural SDEs without the need for backward simulation through stochastic processes.

\subsection{Connections and Motivations}

The continuous SDE formulation of diffusion models and the flow matching framework provide powerful tools for modeling data distributions through stochastic processes. However, these methods often assume fixed diffusion coefficients or rely on extensive iterative computations.

Our approach leverages the flexibility of Neural SDEs to model both the flow and diffusion terms, enabling us to capture intrinsic data uncertainty and stochasticity. By formulating the problem as learning the parameters of an SDE that describes the evolution of continuous sequences, we directly model the time dynamics without the need for discretization or high transport costs associated with iterative methods.


\section{Our Approach}

We introduce a Neural SDE framework to model continuous-time sequences, capturing both deterministic trends and stochastic fluctuations inherent in the data. Our model learns a \textbf{time-invariant} SDE of the form:

\begin{equation}
d\mathbf{x}_t = \mathbf{f}(\mathbf{x}_t)dt + \mathbf{g}(\mathbf{x}_t) \odot d\mathbf{w}_t,
\end{equation}

where:

\begin{itemize}
\item $\mathbf{x}_t \in \mathbb{R}^d$ is the state vector at time $t$,
\item $\mathbf{f}: \mathbb{R}^d \rightarrow \mathbb{R}^d$ is the flow function modeling deterministic dynamics,
\item $\mathbf{g}: \mathbb{R}^d \rightarrow \mathbb{R}^d$ is the diffusion function modeling stochastic dynamics,
\item $\mathbf{w}_t$ is a standard $d$-dimensional Wiener process.
\item $\odot$ is the element-wise product.
\end{itemize}

Our goal is to learn $\mathbf{f}$ and $\mathbf{g}$ from data such that the SDE accurately represents the underlying continuous-time dynamics observed in the sequences.

Compared with the general SDE (Eq.~\ref{eq:general_sde}), we are making two key modeling choices:

\begin{enumerate}

\item \textbf{Time-Invariant System:} The flow and the diffusion function are implicit function of time through the dependency on the state vector, because many real-world systems exhibit dynamics that are consistent over time \citep{zhu2023nonlinear, slotine1991applied}. Modeling these systems with a time-invariant SDE simplifies the learning task and captures the essential state-dependent dynamics without unnecessary complexity.

\item \textbf{Diagonal Diffusion Matrix:} For computational tractability, we use a diagonal diffusion matrix:
\begin{equation}
\mathbf{g}(\mathbf{x}_t) = \operatorname{diag}\left( \sigma_1(\mathbf{x}_t), \sigma_2(\mathbf{x}_t), \dots, \sigma_d(\mathbf{x}_t) \right),
\end{equation}
instead of a full matrix. This implies that the stochastic components affecting each state dimension are independent. The diagonal assumption simplifies calculations, particularly the inversion and determinant of the covariance matrix during training, leading to more efficient optimization. Moreover, in many practical applications, the noise affecting different state variables can be reasonably considered uncorrelated \citep{gardiner2009stochastic_methods, oksendal2013sde_applications}.


\end{enumerate}

\subsection{Euler–Maruyama Discretization}

To work with discrete data, we discretize the continuous SDE using the Euler–Maruyama method \citep{euler_maruyama_method}. Over a small time interval $\Delta t$, the discretized SDE approximates the evolution of $\mathbf{x}_t$ as:

\begin{equation}
    \mathbf{x}_{t+\Delta t} = \mathbf{x}_t + \mathbf{f}(\mathbf{x}_t) \Delta t + \mathbf{g}(\mathbf{x}_t) \odot \Delta \mathbf{w}_t,
\label{eq:euler_maruyama_discretization}
\end{equation}

where $\Delta \mathbf{w}_t = \mathbf{w}_{t+\Delta t} - \mathbf{w}_t$ is the increment of the Wiener process over $\Delta t$. Since $\Delta \mathbf{w}_t \sim \mathcal{N}(\mathbf{0}, \Delta t\mathbf{I}_d)$, the stochastic term introduces Gaussian noise with covariance proportional to $\Delta t$.

\subsection{Transition Probability}

Under this discretization, the conditional distribution of the next state $\mathbf{x}_{t+\Delta t}$ given the current state $\mathbf{x}_t$ is Gaussian:

\begin{equation}
    p(\mathbf{x}_{t+\Delta t} \mid \mathbf{x}_t) = \mathcal{N}\left( \mathbf{x}_{t+\Delta t}; \boldsymbol{\mu}_t, \boldsymbol{\Sigma}_t \right),
\label{eq:gaussian_transition}
\end{equation}

where:

\begin{equation}
    \boldsymbol{\mu}_t = \mathbf{x}_t + \mathbf{f}(\mathbf{x}_t) \Delta t, \
    \boldsymbol{\Sigma}_t = \mathbf{g}(\mathbf{x}_t) \mathbf{g}(\mathbf{x}_t)^\top \Delta t.
\label{eq:mean_and_cov}
\end{equation}

Under the diagonal form of $\mathbf{g}(\mathbf{x}_t)$, the covariance matrix $\boldsymbol{\Sigma}_t$ simplifies to:
\begin{equation}
    \boldsymbol{\Sigma}_t = \operatorname{diag}\left( \sigma_1^2(\mathbf{x}_t), \sigma_2^2(\mathbf{x}_t), \dots, \sigma_d^2(\mathbf{x}_t) \right) \Delta t.
\label{eq:diagonal_cov}
\end{equation}
This probabilistic description allows us to compute the likelihood of observed data under our model.

\subsection{Negative Log-Likelihood Derivation}

Given the above equations, the negative log-likelihood (NLL) of a single-segment transition as in Eq.~\ref{eq:euler_maruyama_discretization} can be simplified as
\begin{align}
    \mathcal{L}_{\mathbf{x}_t} = & \, \frac{1}{2} \| (\mathbf{f}(\mathbf{x}_{t}) - \frac{\Delta \mathbf{x}}{\Delta t}) \odot \mathbf{g}(\mathbf{x}_t)^{-1} \|^2 \Delta t \\
    & + \frac{1}{2} \log  \det  \mathbf{g}(\mathbf{x}_t)\mathbf{g}(\mathbf{x}_t)^\top \nonumber \\
    = & \, \frac{1}{2} \sum_{i=1}^d \left( \frac{f_i(\mathbf{x}_t) - \frac{\Delta x_i}{\Delta t_i}}{\sigma_i(\mathbf{x}_t)} \right)^2 \Delta t_i + \frac{1}{2} \sum_{i=1}^d \log \sigma_i^2(\mathbf{x}_t),
\label{eq:per_step_objective}
\end{align}
where $\Delta \mathbf{x} = \mathbf{x}_{t + \Delta t} - \mathbf{x}_t$, and we have omitted constant terms that do not depend on the modeling parameters.

This loss function consists of:
\begin{itemize}
\item \textbf{Prediction Error Term (first):}
Measures how well the flow function $\mathbf{f}(\mathbf{x}_t)$ matches the observed state changes $\Delta \mathbf{x}$. Note that this term is scaled by inverse of diffusion term $\mathbf{g}(\mathbf{x}_t)$, which is absent in Flow-Matching methods. The intuition is that a mismatch between the flow and the observed state change can be attributed to either an inaccurate flow prediction or the intrinsic uncertainty of the process measured by the diffusion term.    
\item \textbf{Complexity Penalty Term (second):}
The logarithmic determinant term regularizes overly large diffusion coefficients, preventing the model from assigning high uncertainty indiscriminately.
\end{itemize}

Given a trajectory of observed data points $\{\mathbf{x}_{t_k}\}_{k=0}^{N}$ at times $\{t_k\}_{k=0}^N$, we can aggregate the single-segment losses to obtain the total loss:

\begin{equation}
\mathcal{L} = -\sum_{k=0}^{N-1} \log p(\mathbf{x}_{t_{k+1}} \mid \mathbf{x}_{t_k}) = \sum_{k=0}^{N-1} \mathcal{L}_{\mathbf{x}_{t_k}}.
\label{eq:total_objective}
\end{equation}

Unlike Auto-regressive models where $\mathbf{x}_{t_{k+1}}$ needs to be conditioned on all the previous states, here in our model the conditional distribution is simply between consecutive steps due to the the markovian property of the SDE.

\subsection{Training Strategy: Decoupled Optimization of Flow and Diffusion}

The first term in Eq.~(\ref{eq:per_step_objective}) depends on both $\mathbf{f}$ and $\mathbf{g}$, we empirically observe that joint training is susceptible to getting stuck at local minimum. To enhance training stability and interpretability, we derive a decoupled optimization scheme for the flow $f$ and the diffusion term $\mathbf{g}$.

We notice that Eq.~(\ref{eq:per_step_objective}) can be analytically minimized w.r.t. $\mathbf{g}$ by setting
\begin{equation}
g_i^2(\mathbf{x}_t) = \sigma_i^2(\mathbf{x}_t) = \left ( f_i(\mathbf{x}_t) - \frac{ \Delta x_i }{ \Delta t_i } \right )^2 \Delta t_i .
\label{eq:g_minimization_constraint}
\end{equation}
Plugging Eq.~(\ref{eq:g_minimization_constraint}) back to Eq.~(\ref{eq:per_step_objective}) and discarding constant terms, we get the following simplified objective for the flow term
\begin{equation}
\mathcal{L}^f_{\mathbf{x}_t} = \frac{1}{2} \sum_{i = 1}^d \log \left ( f_i(\mathbf{x}_t) - \frac{ \Delta x_i }{ \Delta t_i } \right )^2.
\label{eq:simplified_f_objective}
\end{equation}
Likewise, the constraint Eq.~(\ref{eq:g_minimization_constraint}) suggests the following objective for the diffusion term
\begin{equation}
\mathcal{L}^g_{\mathbf{x}_t} = \frac{1}{2} \sum_{i = 1}^d \left( \sigma_i^2(\mathbf{x}_t) - \left ( f_i(\mathbf{x}_t) - \frac{ \Delta x_i }{ \Delta t_i } \right )^2 \Delta t_i \right)^2.
\label{eq:simplified_g_objective}
\end{equation}
This objective suggests that the diffusion coefficients are matching the residual error of the flow prediction. Similarly, the training objective for the whole trajectory can be obtained by aggregating all the segments as in Eq.~(\ref{eq:total_objective}).

\subsection{Implications of the Simplified Flow Objective}
The logarithmic-squared flow loss (Eq.~\ref{eq:simplified_f_objective}) provides two primary benefits:
\textbf{(i)~\emph{scale invariance}} across different dimensions, which eliminates the need for manually tuning per-dimension loss weights—a property particularly valuable in multi-modal settings where each modality may have distinct units or dynamic ranges; and 
\textbf{(ii)~\emph{robustness to large errors}} due to sub-linear growth, allowing the model to handle high-variance data by naturally increasing the learned diffusion term rather than incurring large penalties. 
A detailed analysis and discussion appear in Appendix~\ref{appendix:flow_objective_details}.

\subsection{Implementation Details}
\label{subsec:implementation-details}
Our main derivation thus far focuses on the theoretical formulation. In practice, we make several additional choices to improve training stability and performance. For example, we employ \textbf{\emph{data interpolation}} between observed time steps, \textbf{\emph{noise injection}} for regularization, and an optional \textbf{\emph{denoiser}} network for improving inference-time trajectory likelihood. We also discuss how we handle the numerical stability of the logarithmic loss and datasets without explicit time. Full details about these implementation aspects are provided in Appendix~\ref{appendix:implementation}.

\subsection{Algorithm Overview}
\label{sec:algorithm-overview}
Our Neural SDE learning algorithm consists of the following high-level steps:
\begin{algorithm}[H]
\caption{Neural SDE Learning Algorithm}
\label{alg:neural_sde}
\begin{algorithmic}[1]
\REQUIRE Dataset of transition tuples $\mathcal{D} = \{\mathbf{x}_{t_i}, \mathbf{x}_{t_{i+1}} \}_{i=1}^N$
\REQUIRE Initialized neural networks: flow $\mathbf{f}(\mathbf{x}; \theta_f)$, diffusion $\boldsymbol{\sigma}^2(\mathbf{x}; \theta_g)$, and optional denoiser $\mathbf{d}(\mathbf{x}; \theta_d)$
\ENSURE Trained parameters $\theta_f$, $\theta_g$, $\theta_d$
\FOR{each mini-batch transitions in $\mathcal{D}$}
\STATE \textbf{Interpolate} between observed states to obtain $\mathbf{x}_{\tau}$
\STATE \textbf{Add noise} to interpolated states: $\tilde{\mathbf{x}}_{\tau} = \mathbf{x}_{\tau} + \boldsymbol{\eta}$
\STATE \textbf{Update flow network} parameters $\theta_f$ using the flow loss (Eq.\ref{eq:simplified_f_objective})
\STATE \textbf{Update diffusion network} parameters $\theta_g$ using the diffusion loss (Eq.\ref{eq:simplified_g_objective})
\STATE \textbf{(Optional) Update denoiser network} parameters $\theta_d$ using the denoising score matching loss (Eq.\ref{eq:dsm_objective})
\ENDFOR
\end{algorithmic}
\end{algorithm}

\vspace{-2em}

\section{Experiments}
In this section, we empirically evaluate the capabilities of Neural SDE across three distinct sequence modeling tasks: (1) a 2D Bifurcation task designed to assess multi-modal trajectory generation, (2) the Push-T imitation learning task, and (3) video prediction on standard benchmark datasets. Details about the setup are provided in appendix ~\ref{appendix:setup}.

\subsection{2D Branching Trajectories}
\begin{figure*}[ht]
\label{fig:bifurcation_density} 
    \centering
    \includegraphics[width=0.9\textwidth]{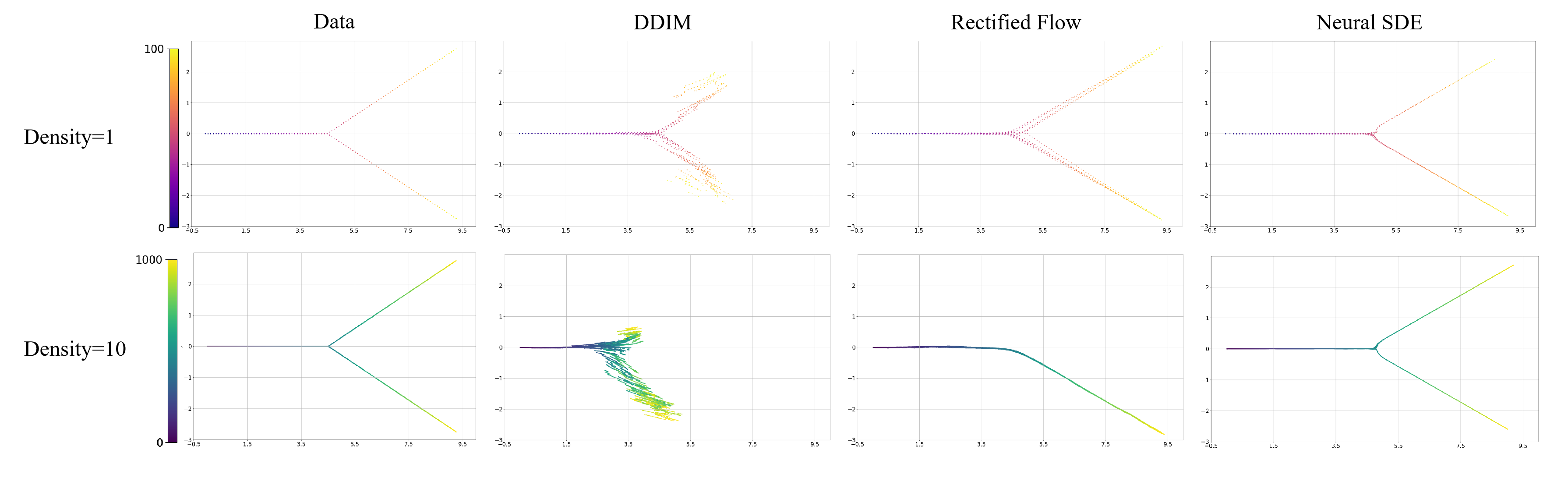}
    \caption{\textbf{Trajectory generation on a Y-shape Bifurcation (multi-modal Distribution).} We compare our proposed Neural SDE approach with DDIM and Rectified Flow at two different densities (number of steps per trajectory). At a lower density, all models successfully generate bi-modal trajectories. At a higher density, DDIM and Rectified Flow fail due to covariate-shift, while Neural SDEs still accurately captures both branches.}
\end{figure*}
\begin{figure*}[h]
\label{fig:bifurcation_ablation}
    \centering
    \includegraphics[width=0.94\textwidth]{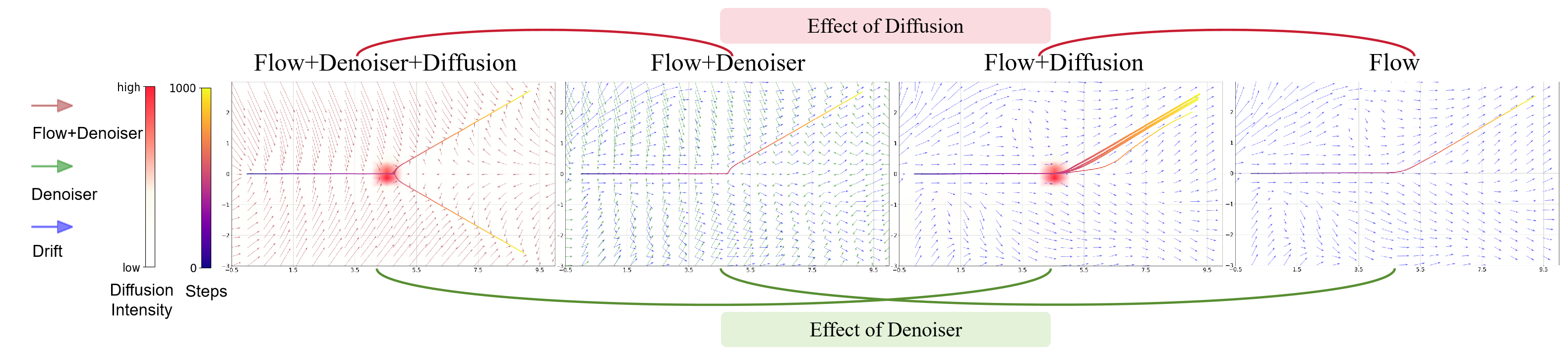}
    \caption{\textbf{Ablation Study of the Neural SDE Components on the Y-shape Bifurcation Task (high density)}. We visualize the learned vector fields with different combinations of the \textit{Flow}, \textit{Diffusion}, and \textit{Denoiser} terms. The scale of vector fields is scaled for visual clarity. The \textit{Flow} term alone captures the general direction but lacks stochasticity. Adding \textit{Diffusion} introduces stochasticity but fails to reach the bifurcation point accurately due to covariate-shift. The \textit{Denoiser} effectively mitigates covariate-shift. As a result, the full model (\textit{Flow+Denoiser+Diffusion}) accurately \textbf{models} the multi-modal distribution.}
\end{figure*}
Accurately capturing multi-modal distributions is crucial for sequence modeling tasks. To evaluate the ability of our model to capture multi-modal distributions, we designed a simple 2D trajectory generation task, where the ground truth trajectories exhibit a Y-shaped bifurcation pattern.



Figure~\ref{fig:bifurcation_density} shows the generated trajectories for our approach and diffusion/flow-matching approaches at both low and high densities (number of steps per trajectory). At the low density, all three models successfully capture the bimodal distribution, producing trajectories that branch as expected. However, a significant difference occurs at the high density. While Neural SDE still captures the two branches, DDIM and Rectified Flow fail. We speculate that this degradation in performance for DDIM and Rectified Flow at higher density is due to covariate-shift. 

To verify, we show the contribution of different components within our model in Figure ~\ref{fig:bifurcation_ablation}. Comparing with the \textit{Flow} and \textit{Flow+Diffusion} plots, we see that the addition of the diffusion term introduces stochasticity into the trajectories, leading to branching. The \textit{Flow+Denoiser+Diffusion} plot shows that incorporating the denoiser effectively mitigates covariate-shift. 

\subsection{Push-T}

\begin{figure}[h!]
    \centering
    \includegraphics[width=0.47\textwidth]{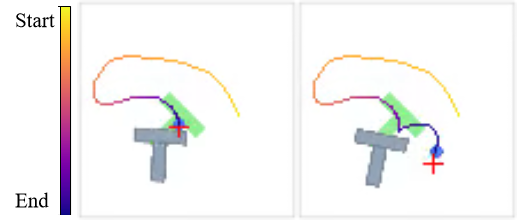}
    \caption{\textbf{Non-Smooth Trajectory Generation.} A Push-T trajectory generated by our Neural SDE, showcasing its ability to handle drastic changes in direction.}
    \label{fig:sharp_pusht}
\end{figure}

\begin{table}[h!]
    \centering
    \begin{tabularx}{0.5\textwidth}{@{}cccccc@{}}
        \toprule
        Method & LSTM-GMM & IBC & BET & DP & NSDE \\
        \midrule
        SR$\uparrow$ & 0.67 & 0.90 & 0.79 & 0.95 & \textbf{0.97} \\
        \bottomrule
    \end{tabularx}
    \caption{\textbf{Success rate (SR) comparison on PushT.} Baseline results (trained on a Transformer backbone) are from \citet{chi2023diffusion}. Neural SDE uses a simple two-layer MLP backbone.}
    \label{table:pushT}
\end{table}

To evaluate the effectiveness of our Neural SDE in imitation learning, we employ the Push-T task. Details about the setup are provided in appendix~\ref{appendix:pusht}.

\textbf{Results} Table~\ref{table:pushT} shows the performance of our Neural SDE compared to the Diffusion Policy baseline. Our model achieves competitive performance, demonstrating its effectiveness in policy learning. This is notable given that SDEs often assume some level of smoothness in the underlying dynamics, while imitation learning often involves non-smooth dynamics. To further investigate the ability of our model to handle trajectories with sharp transitions, we visualize a representative push-T trajectory in Figure \ref{fig:sharp_pusht}. This figure shows that Neural SDE can effectively model such non-smooth behavior, demonstrating its robust sequence-modeling capability.

\subsection{Video Prediction}

To assess our model's ability to learn complex temporal dynamics, we evaluate its performance on two video prediction benchmarks: the KTH \cite{schuldt2004recognizing} dataset and the CLEVRER \citep{yi2019clevrer} dataset. Details about the setup are provided in appendix~\ref{appendix:video}.

\textbf{Quantitative Comparison and Inference Efficiency}
Our Neural SDE model achieves comparable results to the Flow Matching \citep{lipman2022flow} and Probabilistic Forecasting with Interpolants (PFI) \citep{chen2024probabilistic_follmer} baselines across multiple metrics, including FVD \cite{unterthiner2019fvd}, JEDI \citep{bicsi2023jedi} , SSIM \citep{wang2004image}, and PSNR \citep{hore2010image}, on the KTH and CLEVRER datasets. One of the most compelling advantages of our method is its inference efficiency. Figure~\ref{fig:nfe} shows the relationship between the number of function evaluations (NFE) and performance metrics. Our approach significantly outperforms the baselines in efficiency, requiring only 2 steps to achieve results comparable to Flow Matching and PFI, which typically need 5 to 20 steps to generate future frames of reasonable quality. This substantial reduction in NFE highlights the intrinsic efficiency of directly modeling continuous-time dynamics, making it an appealing solution to sequence modeling tasks of high computational complexity.




\begin{figure*}[h!]
    \centering
    \includegraphics[width=0.99\textwidth]{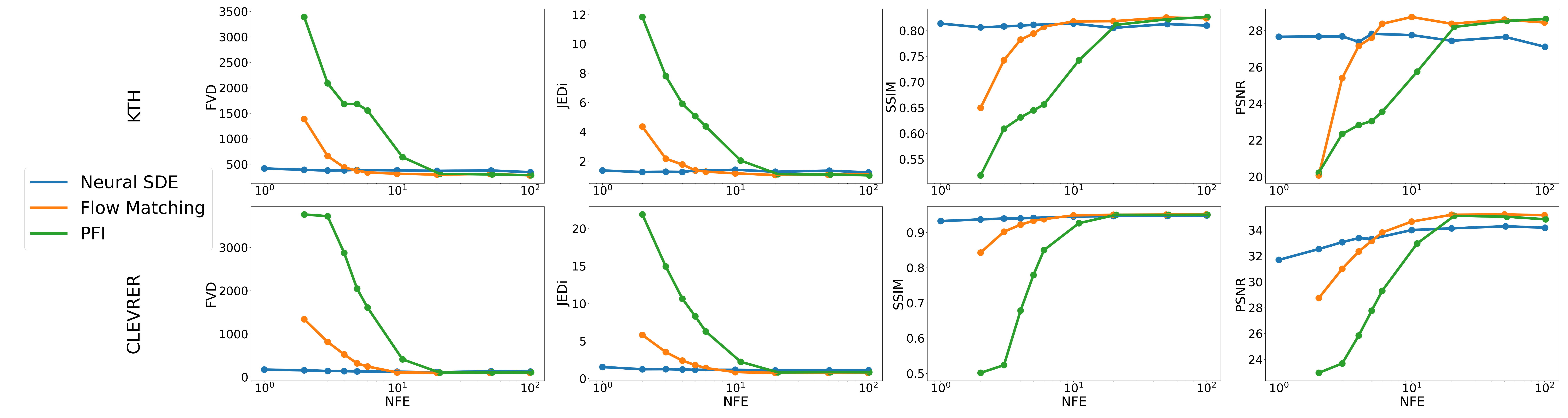}
    \caption{\textbf{Inference Efficiency}. The plots show the performance of Neural SDE, Flow Matching, and PFI on the KTH and CLEVRER datasets, measured by the metrics FVD, JEDI, SSIM and PSNR, with respect to the number of function evaluations (NFE). Lower FVD and JEDi and higher SSIM and PSNR indicate better performance. To control the NFEs of PFI and Neural SDE, we use fixed step sizes. All metrics are measured by generating 1024 test video sequences (randomly sampled with replacement).}
    \label{fig:nfe}
\end{figure*}

\textbf{Temporal Resolution and Implicit Interpolation}
Our Neural SDE approach also offers an appealing capability of improving temporal resolution without incurring additional training cost. As demonstrated in Appendix~\ref{appendix:temp_res}, Neural SDEs can generate coherent intermediate frames between two consecutive frames in the original dataset. In contrast, flow-based methods like Flow Matching are limited to generating frames at the specific temporal resolution defined by the training data, requiring retraining or ad hoc interpolation techniques to achieve higher frame rates. This "free" interpolation capability underscores the power of modeling video as a continuous process.

\textbf{Scalability Analysis}
We further investigated the scalability of our Neural SDE model by analyzing the relationship between model size and validation loss. Following the methodology in \citep{kaplan2020scaling,tian2024visual}, we varied the depth and hidden dimensions of the U-ViT backbone used in our model, resulting in models ranging from $0.13$ million to $113.44$ million parameters, see Appendix~\ref{appendix:validation_loss_scaling} for detail. 

Figure~\ref{fig:scaling_law} shows the scaling behavior of our model. We observe a clear power-law relationship between the number of model parameters and the validation loss. This scaling behavior suggests that our Neural SDE architecture can leverage larger model capacities to achieve better video prediction performance.

\begin{figure}[t]
\label{fig:scaling_law}
    \centering
    \includegraphics[width=0.4\textwidth]{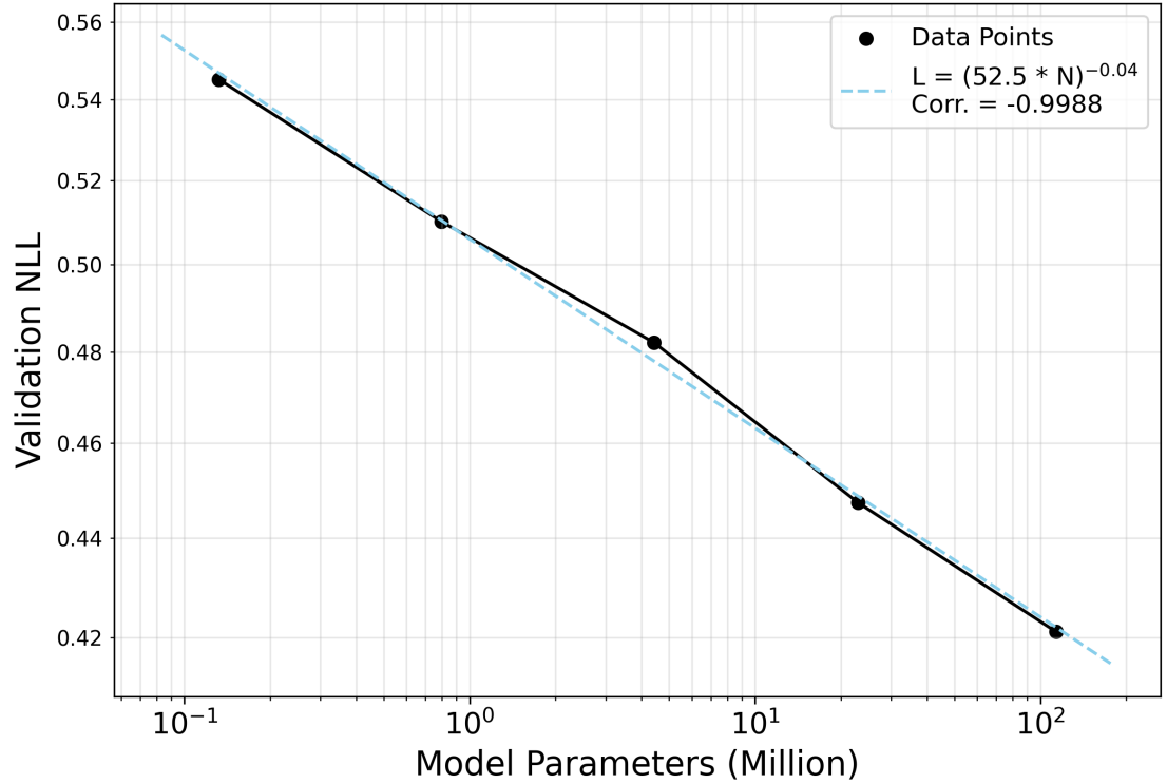}
    \caption{\textbf{Scaling law of Neural SDE with U-ViT backbone on the CLEVRER dataset.} The Pearson correlation coefficient (-0.9988) indicates a strong power-law relationship, suggesting that increasing model size leads to improved performance.}
\end{figure}


\section{Related Works}

A large class of related works are \textit{bridging-based} generative models, including diffusion approaches \citep{ho2020denoising,song2021scorebased}, flow-based or flow-matching frameworks \citep{dinh2016density, ho2019flow++,kobyzev2020normalizing, lipman2022flow,hu2024adaflow}, stochastic interpolants \citep{albergo2022building, chen2024probabilistic_follmer,albergo2023stochastic_data}, and Schrödinger bridges \citep{de2021diffusion_SB,liu2023i2sb}. Despite varying details, these methods commonly start from a simple (often Gaussian) prior distribution and \emph{transport} it to the target data distribution via learned transformations. This “from-noise-to-data” perspective has driven state-of-the-art results in static, unpaired data domains (e.g., image generation). However, two main issues arise when directly applying such frameworks to \emph{continuous time-series}: \textbf{(1)}~they generally unroll samples all the way from an uninformative prior, incurring large transport costs for dense or high-dimensional sequences; and \textbf{(2)}~they often treat time as a “pseudo-time” schedule (e.g., noise levels) rather than the actual chronological axis of the system. As a result, bridging-based methods can be less intuitive and potentially suboptimal for tasks where the \emph{true} temporal progression—and possibly irregular sampling—is essential.

Meanwhile, \textit{Neural SDE} approaches \citep{kong2020sdenet,li2020scalable,kidger2021efficient,liu2020learning,park2021neural,djeumou2023learn} have emerged to incorporate Brownian noise into Neural ODEs, targeting time-series forecasting or sporadic data interpolation. For example, SDE-Net \citep{kong2020sdenet} and Neural SDE \citep{li2020scalable} inject noise into network layers for uncertainty estimation or robustness, while others develop specialized SDE solvers or variational methods to handle irregular temporal observations \citep{kidger2021efficient,liu2020learning}. Although conceptually related, these works often employ \emph{complex} training objectives (e.g., forward-backward SDE solves or Bayesian evidence bounds) and focus on low-dimensional \emph{time-series} datasets rather than more complex tasks as imitation learning or video prediction.

By contrast, our \emph{Neural SDE} framework adopts a \emph{direct maximum-likelihood} approach for learning continuous-time dynamics from observed consecutive states. Rather than bridging data from a simple noise prior, each pair of adjacent observations is treated as a sample from an underlying SDE, bypassing the overhead of unrolling from isotropic noise. This design both \textbf{respects real time evolution}—via Flow and diffusion that capture deterministic trends and random fluctuations—and \textbf{streamlines conditional modeling}, since every transition likelihood is straightforwardly evaluated in a Markovian manner. Consequently, our method is more natural for \emph{embodied and generative AI}, where continuous trajectories must be modeled reliably, yet classical bridging-based methods may be unnecessarily complex and less aligned with genuine temporal structure.

\section{Conclusions}
\label{sec:conclusions}

We have presented a novel \emph{Neural SDE} framework for continuous-domain sequence modeling, offering an alternative to existing \emph{bridging-based} approaches such as diffusion and flow-matching methods. By learning both drift and diffusion terms via a direct maximum likelihood objective, our method naturally captures the stochastic and deterministic components of time-evolving data. Moreover, the Markovian formulation circumvents the need for iterative unrolling from a simple noise prior, thus reducing transport costs and simplifying inference.

Through experiments on multiple domains, we demonstrated that Neural SDEs (1) faithfully model multi-modal distributions, (2) handle sharp or irregular dynamics, (3) generate high-quality predictions with few inference steps, and (4) offer “free” temporal interpolation beyond the training schedule. Our analysis also highlights key properties such as the \emph{scale invariance} of the log-flow loss and the interpretability gained by explicitly modeling diffusion. 

Looking ahead, an interesting extension would be to incorporate additional conditional variables for capturing longer histories, rather than the current approach of augmenting only the latest state. This could improve performance in tasks requiring extended temporal memory. Another promising direction involves tackling noisy actions in real-world embodied AI by focusing on state or observation transitions paired with a learned inverse-dynamics model—alleviating the need for accurately recorded actions. We hope this work stimulates further study into \emph{Neural SDEs} as a unified, principled approach to continuous-domain sequence modeling, bridging the gap between traditional differential equation frameworks and modern machine learning techniques.

\section*{Acknowledgements}
The authors would like to thank Yanjiang Guo for discussion at early stage of algorithm prototyping, and Shanghai QiZhi Institute for funding and computation resources for this research project.


\nocite{langley00}

\bibliography{example_paper}

\begin{thebibliography}{49}
\providecommand{\natexlab}[1]{#1}
\providecommand{\url}[1]{\texttt{#1}}
\expandafter\ifx\csname urlstyle\endcsname\relax
  \providecommand{\doi}[1]{doi: #1}\else
  \providecommand{\doi}{doi: \begingroup \urlstyle{rm}\Url}\fi

\bibitem[Albergo et~al.(2022)Albergo, Kanwar, and Shanahan]{albergo2022building}
Albergo, M.~S., Kanwar, G., and Shanahan, P.~E.
\newblock Building normalizing flows with stochastic interpolants.
\newblock In \emph{International Conference on Machine Learning}, pp.\  213--224. PMLR, 2022.

\bibitem[Albergo et~al.(2023)Albergo, Goldstein, Boffi, Ranganath, and Vanden-Eijnden]{albergo2023stochastic_data}
Albergo, M.~S., Goldstein, M., Boffi, N.~M., Ranganath, R., and Vanden-Eijnden, E.
\newblock Stochastic interpolants with data-dependent couplings.
\newblock \emph{arXiv preprint arXiv:2310.03725}, 2023.

\bibitem[Argall et~al.(2009)Argall, Chernova, Veloso, and Browning]{argall2009survey}
Argall, B.~D., Chernova, S., Veloso, M., and Browning, B.
\newblock A survey of robot learning from demonstration.
\newblock \emph{Robotics and Autonomous Systems}, 57\penalty0 (5):\penalty0 469--483, 2009.

\bibitem[Bayram et~al.(2018)Bayram, Partal, and Orucova~Buyukoz]{euler_maruyama_method}
Bayram, M., Partal, T., and Orucova~Buyukoz, G.
\newblock Numerical methods for simulation of stochastic differential equations.
\newblock \emph{Advances in Difference Equations}, 2018:\penalty0 1--10, 2018.

\bibitem[Bicsi et~al.(2023)Bicsi, Alexe, Ionescu, and Leordeanu]{bicsi2023jedi}
Bicsi, L., Alexe, B., Ionescu, R.~T., and Leordeanu, M.
\newblock Jedi: Joint expert distillation in a semi-supervised multi-dataset student-teacher scenario for video action recognition.
\newblock In \emph{Proceedings of the IEEE/CVF International Conference on Computer Vision}, pp.\  953--962, 2023.

\bibitem[Brown(2020)]{brown2020language}
Brown, T. B. e.~a.
\newblock Language models are few-shot learners.
\newblock In \emph{Advances in Neural Information Processing Systems}, volume~33, pp.\  1877--1901, 2020.

\bibitem[Chen et~al.(2018)Chen, Rubanova, Bettencourt, and Duvenaud]{chen2018neural}
Chen, T.~Q., Rubanova, Y., Bettencourt, J., and Duvenaud, D.
\newblock Neural ordinary differential equations.
\newblock In \emph{Advances in Neural Information Processing Systems}, volume~31, pp.\  6571--6583, 2018.

\bibitem[Chen et~al.(2024)Chen, Goldstein, Hua, Albergo, Boffi, and Vanden-Eijnden]{chen2024probabilistic_follmer}
Chen, Y., Goldstein, M., Hua, M., Albergo, M.~S., Boffi, N.~M., and Vanden-Eijnden, E.
\newblock Probabilistic forecasting with stochastic interpolants and follmer processes.
\newblock \emph{arXiv preprint arXiv:2403.13724}, 2024.

\bibitem[Chi et~al.(2023)Chi, Xu, Feng, Cousineau, Du, Burchfiel, Tedrake, and Song]{chi2023diffusion}
Chi, C., Xu, Z., Feng, S., Cousineau, E., Du, Y., Burchfiel, B., Tedrake, R., and Song, S.
\newblock Diffusion policy: Visuomotor policy learning via action diffusion.
\newblock \emph{The International Journal of Robotics Research}, pp.\  02783649241273668, 2023.

\bibitem[Davtyan et~al.(2023)Davtyan, Sameni, and Favaro]{davtyan2023efficient}
Davtyan, A., Sameni, S., and Favaro, P.
\newblock Efficient video prediction via sparsely conditioned flow matching.
\newblock In \emph{Proceedings of the IEEE/CVF International Conference on Computer Vision}, pp.\  23263--23274, 2023.

\bibitem[De~Bortoli et~al.(2021)De~Bortoli, Thornton, Heng, and Doucet]{de2021diffusion_SB}
De~Bortoli, V., Thornton, J., Heng, J., and Doucet, A.
\newblock Diffusion schr{\"o}dinger bridge with applications to score-based generative modeling.
\newblock \emph{Advances in Neural Information Processing Systems}, 34:\penalty0 17695--17709, 2021.

\bibitem[Dinh et~al.(2016)Dinh, Sohl-Dickstein, and Bengio]{dinh2016density}
Dinh, L., Sohl-Dickstein, J., and Bengio, S.
\newblock Density estimation using real nvp.
\newblock \emph{arXiv preprint arXiv:1605.08803}, 2016.

\bibitem[Djeumou et~al.(2023)Djeumou, Neary, and Topcu]{djeumou2023learn}
Djeumou, F., Neary, C., and Topcu, U.
\newblock How to learn and generalize from three minutes of data: Physics-constrained and uncertainty-aware neural stochastic differential equations.
\newblock \emph{7th Conference on Robot Learning}, 2023.

\bibitem[Gardiner(2009)]{gardiner2009stochastic_methods}
Gardiner, C.
\newblock \emph{Stochastic methods}, volume~4.
\newblock Springer Berlin Heidelberg, 2009.

\bibitem[Gianfelici(2008)]{gianfelici2008numerical}
Gianfelici, F.
\newblock Numerical solutions of stochastic differential equations (kloeden, pk and platen, e.; 2008)[book reviews].
\newblock \emph{IEEE Transactions on Neural Networks}, 19\penalty0 (11):\penalty0 1990--1991, 2008.

\bibitem[Graves(2013)]{graves2013generating}
Graves, A.
\newblock Generating sequences with recurrent neural networks.
\newblock \emph{arXiv preprint arXiv:1308.0850}, 2013.

\bibitem[Graves \& Graves(2012)Graves and Graves]{graves2012long}
Graves, A. and Graves, A.
\newblock Long short-term memory.
\newblock \emph{Supervised sequence labelling with recurrent neural networks}, pp.\  37--45, 2012.

\bibitem[Harier et~al.(2000)Harier, Lubich, and Wanner]{harier2000geometric}
Harier, E., Lubich, C., and Wanner, G.
\newblock Geometric numerical integration.
\newblock \emph{Structure-Preserving Algorithms for Ordinary}, 2000.

\bibitem[Ho et~al.(2019)Ho, Chen, Srinivas, Duan, and Abbeel]{ho2019flow++}
Ho, J., Chen, X., Srinivas, A., Duan, Y., and Abbeel, P.
\newblock Flow++: Improving flow-based generative models with variational dequantization and architecture design.
\newblock In \emph{International conference on machine learning}, pp.\  2722--2730. PMLR, 2019.

\bibitem[Ho et~al.(2020)Ho, Jain, and Abbeel]{ho2020denoising}
Ho, J., Jain, A., and Abbeel, P.
\newblock Denoising diffusion probabilistic models.
\newblock In \emph{Advances in Neural Information Processing Systems}, volume~33, pp.\  6840--6851, 2020.

\bibitem[Ho et~al.(2022)Ho, Salimans, Gritsenko, Chan, Norouzi, and Fleet]{ho2022video}
Ho, J., Salimans, T., Gritsenko, A., Chan, W., Norouzi, M., and Fleet, D.~J.
\newblock Video diffusion models.
\newblock \emph{Advances in Neural Information Processing Systems}, 35:\penalty0 8633--8646, 2022.

\bibitem[Hore \& Ziou(2010)Hore and Ziou]{hore2010image}
Hore, A. and Ziou, D.
\newblock Image quality metrics: Psnr vs. ssim.
\newblock In \emph{2010 20th international conference on pattern recognition}, pp.\  2366--2369. IEEE, 2010.

\bibitem[Hu et~al.(2024)Hu, Liu, Liu, and Liu]{hu2024adaflow}
Hu, X., Liu, B., Liu, X., and Liu, Q.
\newblock Adaflow: Imitation learning with variance-adaptive flow-based policies.
\newblock \emph{arXiv preprint arXiv:2402.04292}, 2024.

\bibitem[Kaplan et~al.(2020)Kaplan, McCandlish, Henighan, Brown, Chess, Child, Gray, Radford, Wu, and Amodei]{kaplan2020scaling}
Kaplan, J., McCandlish, S., Henighan, T., Brown, T.~B., Chess, B., Child, R., Gray, S., Radford, A., Wu, J., and Amodei, D.
\newblock Scaling laws for neural language models.
\newblock \emph{arXiv preprint arXiv:2001.08361}, 2020.

\bibitem[Kidger et~al.(2021{\natexlab{a}})Kidger, Foster, Li, and Lyons]{kidger2021efficient}
Kidger, P., Foster, J., Li, X.~C., and Lyons, T.
\newblock Efficient and accurate gradients for neural sdes.
\newblock \emph{Advances in Neural Information Processing Systems}, 34:\penalty0 18747--18761, 2021{\natexlab{a}}.

\bibitem[Kidger et~al.(2021{\natexlab{b}})Kidger, Morrill, Foster, and Lyons]{kidger2021neural}
Kidger, P., Morrill, J., Foster, J., and Lyons, T.
\newblock Neural sdes as infinite-dimensional gans.
\newblock In \emph{International Conference on Machine Learning}, pp.\  5453--5463. PMLR, 2021{\natexlab{b}}.

\bibitem[Kobyzev et~al.(2020)Kobyzev, Prince, and Brubaker]{kobyzev2020normalizing}
Kobyzev, I., Prince, S.~J., and Brubaker, M.~A.
\newblock Normalizing flows: An introduction and review of current methods.
\newblock \emph{IEEE transactions on pattern analysis and machine intelligence}, 43\penalty0 (11):\penalty0 3964--3979, 2020.

\bibitem[Kong et~al.(2020)Kong, Sun, and Zhang]{kong2020sdenet}
Kong, L., Sun, J., and Zhang, C.
\newblock Sde-net: Equipping deep neural networks with uncertainty estimates.
\newblock \emph{arXiv preprint arXiv:2008.10546}, 2020.

\bibitem[Kornilov et~al.(2024)Kornilov, Mokrov, Gasnikov, and Korotin]{kornilov2024optimal}
Kornilov, N., Mokrov, P., Gasnikov, A., and Korotin, A.
\newblock Optimal flow matching: Learning straight trajectories in just one step.
\newblock \emph{arXiv preprint arXiv:2403.13117}, 2024.

\bibitem[Li et~al.(2024)Li, Tian, Li, Deng, and He]{li2024autoregressive}
Li, T., Tian, Y., Li, H., Deng, M., and He, K.
\newblock Autoregressive image generation without vector quantization.
\newblock In \emph{Advances in Neural Information Processing Systems (NeurIPS)}, 2024.
\newblock Spotlight Presentation.

\bibitem[Li et~al.(2020{\natexlab{a}})Li, Wong, Chen, and Duvenaud]{li2020scalable_gradient}
Li, X., Wong, T.-K.~L., Chen, R.~T., and Duvenaud, D.
\newblock Scalable gradients for stochastic differential equations.
\newblock In \emph{International Conference on Artificial Intelligence and Statistics}, pp.\  3870--3882. PMLR, 2020{\natexlab{a}}.

\bibitem[Li et~al.(2020{\natexlab{b}})Li, Wong, Chen, and Duvenaud]{li2020scalable}
Li, X., Wong, T.-K.~L., Chen, R. T.~Q., and Duvenaud, D.
\newblock Scalable gradients for stochastic differential equations.
\newblock In \emph{International Conference on Artificial Intelligence and Statistics}, pp.\  3870--3882. PMLR, 2020{\natexlab{b}}.

\bibitem[Lipman et~al.(2022)Lipman, Tancik, and Lu]{lipman2022flow}
Lipman, Y., Tancik, M., and Lu, J.
\newblock Flow matching for generative modeling.
\newblock \emph{arXiv preprint arXiv:2210.02747}, 2022.

\bibitem[Liu et~al.(2023)Liu, Vahdat, Huang, Theodorou, Nie, and Anandkumar]{liu2023i2sb}
Liu, G.-H., Vahdat, A., Huang, D.-A., Theodorou, E.~A., Nie, W., and Anandkumar, A.
\newblock I{$^2$}sb: Image-to-image schr{\"o}dinger bridge.
\newblock \emph{arXiv preprint arXiv:2302.05872}, 2023.

\bibitem[Liu et~al.(2020)Liu, Xing, Yang, Wang, Shi, Jin, and Chen]{liu2020learning}
Liu, Y., Xing, Y., Yang, X., Wang, X., Shi, J., Jin, D., and Chen, Z.
\newblock Learning continuous-time dynamics by stochastic differential networks.
\newblock \emph{arXiv preprint arXiv:2006.06145}, 2020.

\bibitem[Oksendal(2013)]{oksendal2013sde_applications}
Oksendal, B.
\newblock \emph{Stochastic differential equations: an introduction with applications}.
\newblock Springer Science \& Business Media, 2013.

\bibitem[Park et~al.(2021)Park, Lee, and Kwon]{park2021neural}
Park, S.~W., Lee, K., and Kwon, J.
\newblock Neural markov controlled sde: Stochastic optimization for continuous-time data.
\newblock In \emph{International Conference on Learning Representations}, 2021.

\bibitem[Radford et~al.(2019)Radford, Wu, Child, Luan, Amodei, and Sutskever]{radford2019language}
Radford, A., Wu, J., Child, R., Luan, D., Amodei, D., and Sutskever, I.
\newblock Language models are unsupervised multitask learners, 2019.
\newblock URL \url{https://cdn.openai.com/better-language-models/language_models_are_unsupervised_multitask_learners.pdf}.
\newblock Technical report, OpenAI.

\bibitem[Schuldt et~al.(2004)Schuldt, Laptev, and Caputo]{schuldt2004recognizing}
Schuldt, C., Laptev, I., and Caputo, B.
\newblock Recognizing human actions: a local svm approach.
\newblock In \emph{Proceedings of the 17th International Conference on Pattern Recognition, 2004. ICPR 2004.}, volume~3, pp.\  32--36. IEEE, 2004.

\bibitem[Slotine et~al.(1991)Slotine, Li, et~al.]{slotine1991applied}
Slotine, J.-J.~E., Li, W., et~al.
\newblock \emph{Applied nonlinear control}, volume 199.
\newblock Prentice hall Englewood Cliffs, NJ, 1991.

\bibitem[Song \& Ermon(2019)Song and Ermon]{song2019generative}
Song, Y. and Ermon, S.
\newblock Generative modeling by estimating gradients of the data distribution.
\newblock In \emph{Advances in Neural Information Processing Systems}, volume~32, pp.\  1195--1205, 2019.

\bibitem[Song et~al.(2021)Song, Sohl-Dickstein, Kingma, Kumar, Ermon, and Poole]{song2021scorebased}
Song, Y., Sohl-Dickstein, J., Kingma, D.~P., Kumar, A., Ermon, S., and Poole, B.
\newblock Score-based generative modeling through stochastic differential equations.
\newblock In \emph{International Conference on Learning Representations}, 2021.

\bibitem[Sutskever(2014)]{sutskever2014sequence}
Sutskever, I.
\newblock Sequence to sequence learning with neural networks.
\newblock \emph{arXiv preprint arXiv:1409.3215}, 2014.

\bibitem[Tian et~al.(2024)Tian, Jiang, Yuan, Peng, and Wang]{tian2024visual}
Tian, K., Jiang, Y., Yuan, Z., Peng, B., and Wang, L.
\newblock Visual autoregressive modeling: Scalable image generation via next-scale prediction.
\newblock \emph{arXiv preprint arXiv:2404.02905}, 2024.

\bibitem[Tong et~al.(2023)Tong, Fatras, Malkin, Huguet, Zhang, Rector-Brooks, Wolf, and Bengio]{tong2023improving}
Tong, A., Fatras, K., Malkin, N., Huguet, G., Zhang, Y., Rector-Brooks, J., Wolf, G., and Bengio, Y.
\newblock Improving and generalizing flow-based generative models with minibatch optimal transport.
\newblock \emph{arXiv preprint arXiv:2302.00482}, 2023.

\bibitem[Unterthiner et~al.(2019)Unterthiner, van Steenkiste, Kurach, Marinier, Michalski, and Gelly]{unterthiner2019fvd}
Unterthiner, T., van Steenkiste, S., Kurach, K., Marinier, R., Michalski, M., and Gelly, S.
\newblock Fvd: A new metric for video generation.
\newblock 2019.

\bibitem[Wang et~al.(2004)Wang, Bovik, Sheikh, and Simoncelli]{wang2004image}
Wang, Z., Bovik, A.~C., Sheikh, H.~R., and Simoncelli, E.~P.
\newblock Image quality assessment: from error visibility to structural similarity.
\newblock \emph{IEEE transactions on image processing}, 13\penalty0 (4):\penalty0 600--612, 2004.

\bibitem[Yi et~al.(2019)Yi, Gan, Li, Kohli, Wu, Torralba, and Tenenbaum]{yi2019clevrer}
Yi, K., Gan, C., Li, Y., Kohli, P., Wu, J., Torralba, A., and Tenenbaum, J.~B.
\newblock Clevrer: Collision events for video representation and reasoning.
\newblock \emph{International Conference on Learning Representations}, 2019.

\bibitem[Zhu(2023)]{zhu2023nonlinear}
Zhu, Q.
\newblock \emph{Nonlinear systems}.
\newblock MDPI-Multidisciplinary Digital Publishing Institute, 2023.

\end{thebibliography}
\bibliographystyle{icml2024}

\newpage
\appendix
\onecolumn


\section{Implications of the Simplified Flow Objective}
\label{appendix:flow_objective_details}
The simplified flow objective in Eq.~(\ref{eq:simplified_f_objective}) introduces two key advantages:
\paragraph{Scale-Invariance:}
The logarithmic squared loss imparts a \textit{scale-invariant} property to the objective function. Specifically, scaling each dimension of the flow function  $f_i(\mathbf{x}_t)$  and the observed rate  $\frac{\Delta x_i}{\Delta t_i}$ by independent positive constants $c_i$ adds only a constant term to the loss:
\begin{align}
    \mathcal{L}^f_{\mathbf{x}_t} = & \frac{1}{2} \sum_{i=1}^d \left[ \log \left( c_i^2 \left( f_i(\mathbf{x}_t) - \frac{\Delta x_i}{\Delta t_i} \right)^2 \right) \right] \nonumber\\
    = & \frac{1}{2} \sum_{i=1}^d \left[ \log c_i^2 + \log \left( f_i(\mathbf{x}_t) - \frac{\Delta x_i}{\Delta t_i} \right)^2 \right].
    \label{eq:scale_invariance}
\end{align}

The additive constants $\log c_i^2$ do not affect optimization with respect to model parameters. This property is particularly beneficial when dealing with multi-modal data where different dimensions have varying units or scales—as in embodied AI scenarios involving diverse physical quantities. It eliminates the need for manual weighting or scaling of loss terms across dimensions.
\paragraph{Robustness to Large Errors}
The logarithmic function grows sub-linearly, making the loss function \textbf{more tolerant of large errors} compared to a standard squared loss. This aligns with the model’s treatment of uncertainty via the diffusion term  $\mathbf{g}(\mathbf{x}_t)$ . Larger discrepancies between the predicted flow and observed changes are accommodated as increased stochasticity rather than penalized heavily. This characteristic enables the model to handle systems with inherent variability more effectively, without being unduly influenced by outliers or noise.

\section{Implementation Details}
\label{appendix:implementation}
\paragraph{State Interpolation for Training}
To augment the training data and improve generalization, we interpolate between observed discrete states, similar to techniques used in flow matching and diffusion models. For each pair of consecutive observed states $\mathbf{x}_{t_k}$ and $\mathbf{x}_{t_{k+1}}$, we sample intermediate times $\tau \in [t_k, t_{k+1}]$ uniformly and generate linearly interpolated states $\mathbf{x}_\tau$. This strategy can be interpreted as using a modified discretization scheme that introduces additional evaluation points without altering the order of discretization error inherent in the Euler–Maruyama method. By incorporating interpolated states, we effectively increase the diversity of training samples and encourage the model to capture the underlying continuous dynamics more accurately.

\paragraph{Noise Injection}

Inspired by the stochastic interpolants framework \citep{albergo2022building}, we add random noise to the interpolated states during training. Specifically, we perturb the states with Gaussian noise: $\tilde{\mathbf{x}}_{\tau} = \mathbf{x}_{\tau} + \boldsymbol{\eta}$, where $\boldsymbol{\eta} \sim \mathcal{N}(\mathbf{0}, \sigma^2 \mathbf{I})$. This can be seen as modifying the discretization scheme to account for stochastic variability, while maintaining the same order of discretization error. Noise injection acts as an implicit regularizer, enhancing the model’s robustness to data perturbations and encouraging the diffusion function $\mathbf{g}(\mathbf{x}_t)$ to account for intrinsic uncertainty in the data.

\paragraph{Incorporating a Denoiser Network}

To ensure that the inferred trajectories remain close to high-probability regions of the data distribution, we integrate a denoiser network into the model. Specifically, we augment the flow function with an estimate of the score function trained via denoising score matching, leading to the modified SDE:
\begin{equation}
    d\mathbf{x}_t 
    = \Bigl( \mathbf{f}(\mathbf{x}_t) 
    + \alpha \,\nabla_{\mathbf{x}_t} \log p(\mathbf{x}_t) \Bigr) \,dt 
    \;+\; \mathbf{g}(\mathbf{x}_t)\,d\mathbf{w}_t.
\end{equation}
Here, we learn a denoiser network \(\mathbf{d}(\mathbf{x}_t)\) to approximate 
\(\nabla_{\mathbf{x}_t} \log p(\mathbf{x}_t)\) by training on the \emph{denoising score matching} (DSM) objective:
\begin{equation}
    \mathcal{L}_{\text{DSM}}(\mathbf{d})
    \;=\;
    \mathbb{E}_{\substack{\mathbf{x} \sim p(\mathbf{x}),\\
    \boldsymbol{\epsilon} \sim \mathcal{N}(\mathbf{0}, \sigma^2 \mathbf{I})}}
    \Bigl[
        \bigl\|
          \mathbf{d}(\mathbf{x} + \boldsymbol{\epsilon}) 
          - \tfrac{\boldsymbol{\epsilon}}{\sigma^2}
        \bigr\|^2
    \Bigr],
\label{eq:dsm_objective}
\end{equation}
where \(\boldsymbol{\epsilon}\) is an isotropic Gaussian perturbation of variance \(\sigma^2\). 
The denoiser thus learns to predict the direction pointing back toward the unperturbed sample, which acts as an approximation to the true score \(\nabla_{\mathbf{x}} \log p(\mathbf{x})\). During inference, adding \(\alpha\, \mathbf{d}(\mathbf{x}_t)\) to the Flow term helps guide trajectories toward regions of higher data density. 

\textbf{Remark on \(\alpha\):} 
Note that \(\alpha\) need not be a fixed constant; it can be a \emph{learnable function} of the current state \(\mathbf{x}\). In principle, one might choose 
\(\alpha(\mathbf{x}_t) = \tfrac12\,\mathbf{g}(\mathbf{x}_t)\,\mathbf{g}(\mathbf{x}_t)^\top\) (or a scaled variant thereof) to \emph{partially or fully cancel} the diffusion in the Fokker--Planck equation, balancing the expansive effect of diffusion with contraction toward high-density regions. We discuss this balance further in Section~\ref{sec:diffusion_cancellation}.

\paragraph{Desingularization of the Flow Loss}

The logarithmic squared loss in the flow objective (Eq.~\ref{eq:simplified_f_objective}) has a singularity when the residual approaches zero. This singularity corresponds to a degenerate Gaussian distribution and can lead to overfitting of the flow network to the observed state changes. To mitigate this issue and enhance numerical stability, we introduce a positive regularizer $\epsilon$ into the loss function:
\begin{equation}
\mathcal{L}^f_{\mathbf{x}_t} = \frac{1}{2} \sum_{i = 1}^d \log \left( \left( f_i(\mathbf{x}_t) - \frac{ \Delta x_i }{ \Delta t } \right)^2 + \delta \right).
\end{equation}
This desingularization constant $\delta$ can be interpreted as a smooth transition parameter between the logarithmic square loss ($\delta$ = 0) and the standard square loss ($\delta = +\infty$). 

\paragraph{Handling Datasets Without Explicit Time}
When dealing with datasets that lack explicit time information, we introduce a uniform, manually selected time step, $\Delta t$. In our experiments, we set $\Delta t = 1$, effectively scaling the observed changes to a suitable range where the machine learning models can converge more efficiently. This choice of $\Delta t$ influences the magnitude of the learned flow ($\mathbf{f}$) and diffusion ($\mathbf{g}$) terms. Specifically, for a scaling $\Delta t \rightarrow \lambda \Delta t$, it is straightforward to derive from Eq.~\ref{eq:simplified_f_objective} and \ref{eq:simplified_g_objective} that the optimal flow and diffusion follow the corresponding scaling $f \rightarrow \frac{f}{\lambda}$,  $g \rightarrow \frac{g}{\sqrt{\lambda}}$. This scaling, however, does not affect the underlying dynamics captured by the Neural SDE. As proven in the subsequent section, the model's inference results are invariant to the manually chosen $\Delta t$. In this sense, $\Delta t$ can be regarded as a virtual time unit which is not crucial for the learned dynamics. This property allows for training models on data without precise temporal labels.

\section*{Proof: Temporal Scale Invariance of SDE Numerical Simulation}

Consider a data-driven Stochastic Differential Equation (SDE):
\begin{equation}
    dX = f^\theta(X)dt + g^\gamma(X)dW,
\end{equation}
where:
\begin{itemize}
    \item $X \in \mathbb{R}^D$ is the state variable
    \item $f^\theta: \mathbb{R}^D \to \mathbb{R}^D$ is the parameterized drift term
    \item $g^\gamma: \mathbb{R}^D \to \mathbb{R}^{D \times D}$ is the parameterized diffusion term
    \item $dW$ is a $D$-dimensional Wiener process increment
\end{itemize}

\begin{theorem}[Numerical Simulation Temporal Scale Invariance]
For any scaling factor $\lambda > 0$, let:
\begin{equation}
    \tilde{f}^\theta_d = \frac{1}{\lambda}f^\theta_d, \quad \tilde{g}^\gamma_d = \frac{1}{\sqrt{\lambda}}g^\gamma_d
\end{equation}
Then the scaled SDE:
\begin{equation}
    dX = \tilde{f}^\theta(X)(\lambda dt) + \tilde{g}^\gamma(X)dW
\end{equation}
has Euler-Maruyama discretization statistically equivalent to the original SDE.
\end{theorem}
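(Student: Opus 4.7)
The approach is to verify that the two Euler--Maruyama chains---one for the original SDE at step $\Delta t$, one for the scaled SDE at step $\lambda\Delta t$---have identical one-step transition kernels, and then conclude equality in distribution of the entire discrete trajectory by induction. Both chains are Markov with matching initial condition, so step-wise distributional agreement propagates to all finite-dimensional distributions of the discretized process.

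First I would write the Euler--Maruyama update for the original system at step $\Delta t$:
\begin{equation}
X_{n+1} \;=\; X_n + f^\theta(X_n)\,\Delta t + g^\gamma(X_n)\,\Delta W_n,\qquad \Delta W_n \sim \mathcal{N}(0,\Delta t\,I).
\end{equation}
Next I would write the Euler--Maruyama update for the scaled system, using its natural step $\Delta t' = \lambda\,\Delta t$ so that both schemes cover the same physical horizon in the same number of steps:
\begin{equation}
X'_{n+1} \;=\; X'_n + \tilde f^\theta(X'_n)\,\Delta t' + \tilde g^\gamma(X'_n)\,\Delta W'_n,\qquad \Delta W'_n \sim \mathcal{N}(0,\lambda\,\Delta t\,I).
\end{equation}
Substituting $\tilde f = f/\lambda$ collapses the drift contribution to $f^\theta(X'_n)\,\Delta t$, and invoking the Brownian scaling identity $\Delta W'_n \stackrel{d}{=} \sqrt{\lambda}\,\Delta W_n$, the diffusion contribution reduces in distribution to $\bigl(g^\gamma(X'_n)/\sqrt{\lambda}\bigr)\cdot\sqrt{\lambda}\,\Delta W_n = g^\gamma(X'_n)\,\Delta W_n$. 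Hence, conditionally on $X_n = X'_n$, the one-step update induces the same Gaussian transition kernel $\mathcal{N}\!\bigl(X_n + f^\theta(X_n)\Delta t,\; g^\gamma(X_n) g^\gamma(X_n)^\top \Delta t\bigr)$ in both schemes. A simple induction on $n$, starting from identical initial conditions, then yields equality in law of the two discrete-time trajectories sampled on their respective grids.

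I do not anticipate a substantive obstacle: the argument is essentially bookkeeping built on the Wiener-scaling identity $W_{\lambda t} \stackrel{d}{=} \sqrt{\lambda}\,W_t$ and the linearity of Euler--Maruyama in its drift and diffusion coefficients. The only care needed is to state the notion of \emph{statistical equivalence} precisely---equality in law of the discretized process when sampled on its own coarse grid, not strong path-wise equality of continuous-time realizations---and to note that the comparison is grid-matched, so that $N$ steps of the scaled scheme correspond to $N$ steps of the original scheme, not to $\lambda N$ steps. With those conventions fixed, the chain of equalities outlined above closes the proof.
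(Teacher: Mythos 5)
Your proposal is correct and follows essentially the same route as the paper: both write the Euler--Maruyama updates for the original scheme at step $\Delta t$ and the scaled scheme at step $\lambda\Delta t$, substitute $\tilde f = f/\lambda$ and $\tilde g = g/\sqrt{\lambda}$, and observe that the update collapses to the original one. The only (harmless) difference is that the paper couples the two chains by reusing the same standard normals $Z_k$, obtaining the original recursion exactly, whereas you argue in distribution via Brownian scaling and then propagate through the Markov structure by induction --- a slightly more pedantic but equivalent formalization.
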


\begin{proof}
\textbf{Original discretization} ($\Delta t_{n,k}$):
\begin{equation}
    X_{k+1} = X_k + f^\theta(X_k)\Delta t_{k} + g^\gamma(X_k)\sqrt{\Delta t_{k}}Z_k
\end{equation}

\textbf{Scaled discretization} ($\lambda\Delta t_{n,k}$):
\begin{align}
    X'_{k+1} &= X'_k + \tilde{f}^\theta(X'_k)(\lambda\Delta t_{k}) + \tilde{g}^\gamma(X'_k)\sqrt{\lambda\Delta t_{k}}Z_k \\
    &= X'_k + \frac{1}{\lambda}f^\theta(X'_k)(\lambda\Delta t_{k}) + \frac{1}{\sqrt{\lambda}}g^\gamma(X'_k)\sqrt{\lambda\Delta t_{k}}Z_k \\
    &= X'_k + f^\theta(X'_k)\Delta t_{k} + g^\gamma(X'_k)\sqrt{\Delta t_{k}}Z_k
\end{align}

This recovers the original discretization exactly, proving statistical equivalence. $\blacksquare$
\end{proof}


\section{Additional Experimental Details.}
\label{appendix:setup}
\subsection{2D Branching Trajectories}
\label{appendix:2d}
For time $t < 4.5$, the velocity is defined by a magnitude $u$ and an angle $\theta_1 = 0^\circ$:
\begin{equation*}
\frac{\mathrm{d} X}{\mathrm{d} t}  = u \begin{bmatrix} \cos(\theta_1) \\ \sin(\theta_1) \end{bmatrix} = u \begin{bmatrix} \cos(0^\circ) \\ \sin(0^\circ) \end{bmatrix} = \begin{bmatrix} u \\ 0 \end{bmatrix}
\end{equation*}

After $t \geq 4.5$, the trajectories diverge, following one of two paths with angle $\theta_2 = 30^\circ$:
\begin{equation*}
\frac{\mathrm{d} X}{\mathrm{d} t}  = u \begin{bmatrix} \cos(\theta_2) \\ \pm \sin(\theta_2) \end{bmatrix} = u \begin{bmatrix} \cos(30^\circ) \\ \pm \sin(30^\circ) \end{bmatrix} = \begin{bmatrix} \frac{\sqrt{3}}{2}u \\ \pm \frac{1}{2}u \end{bmatrix}
\end{equation*}

We generated datasets with two different densities. For the low density (visualized with $density=1$), we sampled 100 data points. For the high density ($density=10$), we sampled 1000 points.We trained Neural SDE, along with DDIM and Rectified Flow baselines, using the same MLP architecture to ensure a fair comparison. For each model and density configuration, we generated 10 independent trajectories.

\subsection{Video Prediction}
\label{appendix:video}
\textbf{Experiment Setup}
The KTH dataset encompasses human actions performed in various settings, while the CLEVRER dataset features complex interactions between simple 3D shapes governed by physical laws. We generate 36 and 12 future frames given the initial 4 frames, for the KTH and CLEVRER datasets respectively.  

\textbf{Implementation Details (continued).} For a fair comparison, we implemented Flow Matching \citep{lipman2022flow} and Probabilistic Forecasting with Interpolants (PFI) \citep{chen2024probabilistic_follmer} using the same U-ViT architecture.\footnote{All methods were adapted from the code of RIVER \citep{davtyan2023efficient}.} Both baselines condition on the past 4 frames to generate the next frame autoregressively. Specifically, we concatenate these 4 frames into a single state:
\[
    X_{t} = \bigl[x_{t},\, x_{t-\Delta t},\, x_{t-2\Delta t},\, x_{t-3\Delta t}\bigr].
\]
At each training step, the model sees a pair of consecutive states \(\bigl(X_t,\, X_{t+\Delta t}\bigr)\). Under the Neural SDE framework, we aim to estimate
\[
    \frac{d X_t}{dt} \quad \approx \quad \frac{X_{t+\Delta t} - X_t}{\Delta t}.
\]
However, since \(X_t\) itself encapsulates multiple consecutive frames, a naive method would be to predict \emph{all} four finite differences:
\[
    \frac{1}{\Delta t} \Bigl(x_{t+\Delta t}-x_{t},\; x_{t}-x_{t-\Delta t},\; x_{t-\Delta t}-x_{t-2\Delta t},\; x_{t-2\Delta t}-x_{t-3\Delta t}\Bigr).
\]
Unfortunately, this can encourage a “shortcut” solution: the last three differences are merely trivial subtractions of already-visible inputs, allowing the network to ignore the first (crucial) term and simply copy known differences.

To avoid this pitfall and foster better generalization, we \textbf{only} predict the \emph{first} difference, 
\[
    \dot{x}_{t}^{(1)} = \frac{x_{t+\Delta t}-x_{t}}{\Delta t},
\]
and rely on simple algebraic subtraction for the remaining terms:
\[
\begin{aligned}
    \dot{x}_{t}^{(2)} &= x_{t} - x_{t-\Delta t}, \\
    \dot{x}_{t}^{(3)} &= x_{t-\Delta t} - x_{t-2\Delta t}, \\
    \dot{x}_{t}^{(4)} &= x_{t-2\Delta t} - x_{t-3\Delta t}.
\end{aligned}
\]
In other words, our flow network \(f(\cdot)\) directly learns only the \(\dot{x}_{t}^{(1)}\) component. By doing so, the model cannot simply “memorize” those last three differences; it must genuinely learn to predict how the next frame \(\bigl(x_{t+\Delta t}\bigr)\) evolves from the current frame \(\bigl(x_{t}\bigr)\). This design choice improves generalization, as it enforces \emph{non-trivial} predictions of future data rather than letting the network exploit short-term historical redundancy.

\paragraph{Inference-Time Construction (Euler–Maruyama Step).}
At inference time, we approximate each SDE step from \(t\) to \(t + \Delta t\) using a single Euler–Maruyama update:
\[
    x_{t+\Delta t} 
    \;=\; 
    x_{t} 
    \;+\; 
    \Delta t \,\hat{f}\bigl(X_t\bigr) 
    \;+\; 
    \sqrt{\Delta t}\,\hat{g}\bigl(X_t\bigr)\,\boldsymbol{\zeta},
\]
where \(\boldsymbol{\zeta} \sim \mathcal{N}(0, \mathbf{I})\). Once \(x_{t+\Delta t}\) is computed, the state \(X_{t+\Delta t}\) is updated by shifting:
\[
    X_{t+\Delta t} 
    \;=\;
    \bigl[x_{t+\Delta t}, \, x_{t}, \, x_{t-\Delta t}, \, x_{t-2\Delta t}\bigr].
\]
This procedure \emph{respects the continuous-time Markov property}, ensuring that newly generated frames emerge directly from the learned SDE dynamics rather than from extraneous predictions of already-known differences. In doing so, the model achieves more \emph{robust} performance for trajectory or video frame generation.

\subsection{Push-T}
\label{appendix:pusht}
This benchmark, initially proposed by Florence et al. and subsequently utilized by Chen et al. \citep{chi2023diffusion}, requires controlling a circular end-effector to push a T-shaped block to a target location. The task inherently involves non-smooth trajectories due to intermittent contact and changes in contact dynamics, providing a challenging scenario for sequence modeling. For our evaluation, the agent receives as input the pose of the T-shaped block (represented by 1 keypoint and 1 angle) and the end-effector's location. 

To cast imitation learning as sequence generation, we treat the expert demonstrations as sequences of states and actions. For the Diffusion Policy (DP) baseline \citep{chi2023diffusion}, the input consists of $T_o=2$ observations and $T_p$ noisy actions, predicting a sequence of $T_p=16$ actions, from which $T_a=8$ actions are used for execution. Our Neural SDE (NSDE) model concatenates the current observation and action as the state and predicts a sequence of future states. From this predicted sequence, $T'_a=8$ actions are extracted and executed.

\section{High Temporal Resolution Video}
\begin{figure*}[ht!]
\label{fig:temp_res}
    \centering
    \includegraphics[width=\textwidth]{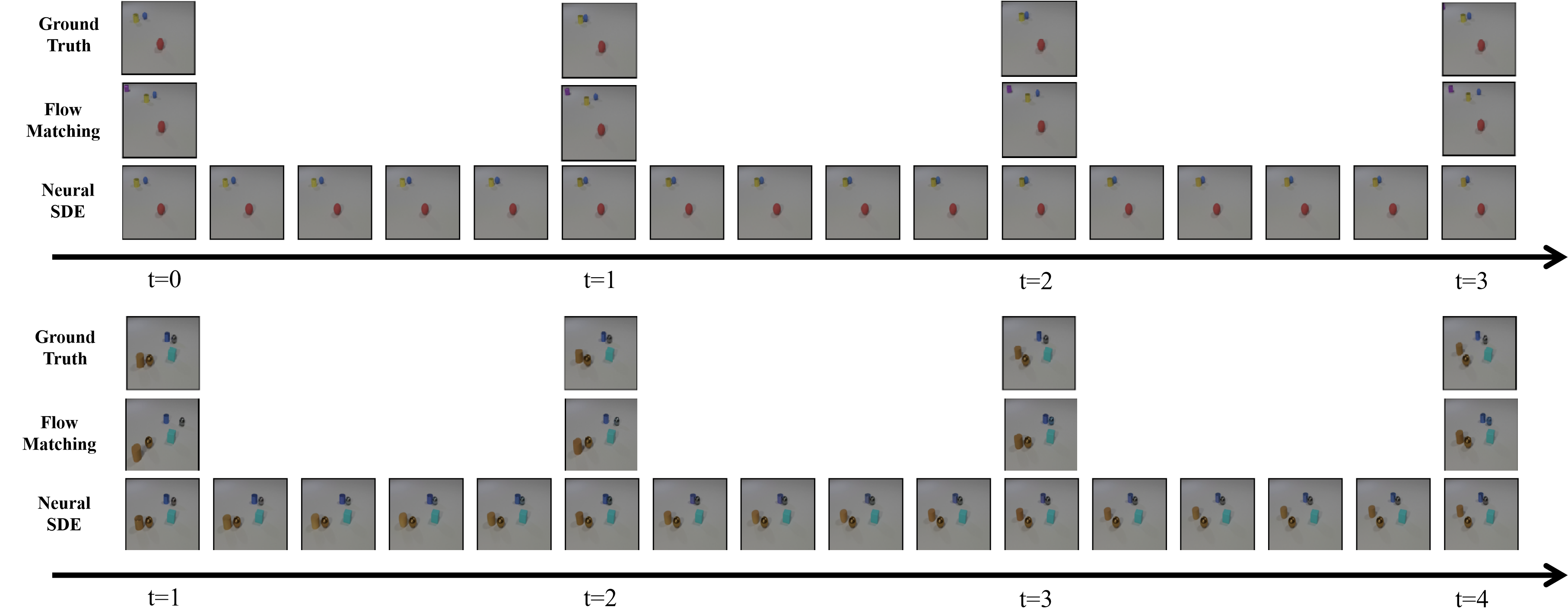}
    \caption{\textbf{High Temporal Resolution Video Generation.} This figure compares ground truth video frames with predictions from Flow Matching and our Neural SDE (NSDE) model. The ground truth frames are subsampled by a factor of 5, in order to reduce the computational cost of training. The top half of the figure shows a sequence where, in the ground truth, a yellow cylinder and a blue sphere move towards each other, appear to make contact, then move apart. Flow Matching's prediction shows the blue sphere moving towards the yellow cylinder but moving away before contact appears to occur, indicating a potential skipped frame. NSDE accurately captures the approaching, contacting, and separating motion of the two objects. The bottom half of the figure shows a sequence where, in the ground truth, a blue cylinder and a silver sphere move toward each other. Both the Ground Truth and Flow Matching show the objects moving towards each other and then separating, but missing the contact frame because of the subsampling. NSDE is able to generate intermediate frames that display the contact, demonstrating its capacity for generating videos at high temporal resolution even with relatively sparse training data. For more examples, please check out our project website \href{https://neuralsde.github.io/}{ https://neuralsde.github.io/}.}
\end{figure*}

\label{appendix:temp_res}
Obtaining and training on data at high temporal resolution can be expensive. In this section, we investigate the ability of our Neural SDE model to generate videos at a high temporal resolution, even when trained on sparse, subsampled data. Figure \ref{fig:temp_res} presents a visual comparison of our model's performance against a baseline, demonstrating our model's capacity to generate intermediate frames.

\section{Ablation Study}
\begin{table}[ht!]
\centering
    \begin{tabular}{lcccccccc}
        \toprule
        \multirow{2}{*}{\textbf{Model}} & \multicolumn{4}{c}{\textbf{KTH}} & \multicolumn{4}{c}{\textbf{CLEVRER}} \\
        \cmidrule(lr){2-5} \cmidrule(lr){6-9}
        & \textbf{FVD$\downarrow$} & \textbf{JEDi$\downarrow$} & \textbf{SSIM$\uparrow$} & \textbf{PSNR$\uparrow$} & \textbf{FVD$\downarrow$} & \textbf{JEDi$\downarrow$} & \textbf{SSIM$\uparrow$} & \textbf{PSNR$\uparrow$} \\
        \midrule
        Neural SDE          & \textbf{395.75}  & 1.325    & 0.807    & 27.55 & \textbf{128.07}   & \textbf{1.172}   & \textbf{0.944}    & \textbf{33.78} \\
        w/o Denoiser          & 408.91   & \textbf{1.155}   & \textbf{0.828}   & \textbf{28.95} & 139.98   & 1.437  & 0.936    & 32.63 \\
        w/o Noise Injection   & 470.46   & 1.722   & 0.787   & 26.30 & 153.72   & 1.334    & 0.936   & 32.82 \\
        \bottomrule
    \end{tabular}
\caption{\textbf{Ablation Study on Denoiser Network and Noise Injection}. Lower FVD and JEDI are better, while higher SSIM and PSNR are better. All metrics are measured by generating sequences of 5120 test videos (randomly sampled with replacement).}
\label{table:ablation}
\end{table}

\label{appendix:ablation}
In Table~\ref{table:ablation}, we present the performance of the Neural SDE model when removing the denoiser network and the noise injection, evaluated on two video prediction datasets. The results demonstrate that both components are crucial for achieving better performance.

\section{Validation Loss for Scaling Experiment}
\label{appendix:validation_loss_scaling}
We evaluated the validation performance on the CLEVRER dataset using the reduced loss function \footnote{See \citep{kaplan2020scaling} for detailed explanation.} defined in Equation~\eqref{eq:reduced_loss}, with the desingularization constant $\delta = 0.001$ factored out: 

\begin{equation}
\label{eq:reduced_loss}
    \mathcal{L}' = \frac{1}{Nd} \sum_{j = 1}^N\sum_{i = 1}^d \log \left( \left( f_i(\mathbf{x}^j_t) - \frac{ \Delta x^j_i }{ \Delta t } \right)^2 + \delta \right)- \log(\delta).
\end{equation}


\section{Guiding Diffusion with a Score Function: Balancing Expansion and Contraction}
\label{sec:diffusion_cancellation}

In many bridging-based generative frameworks, one introduces a \emph{score function} term to the drift of an SDE, effectively ``counteracting'' some portion of the diffusion. Concretely, consider modifying our baseline SDE
\[
  d\mathbf{X}_t \;=\; 
  \mathbf{f}(\mathbf{X}_t)\,dt \;+\; \mathbf{G}(\mathbf{X}_t)\,d\mathbf{W}_t
\]
to
\begin{equation}
  \label{eq:modified_sde_alpha}
  d\mathbf{X}_t
  \;=\;
  \bigl[\mathbf{f}(\mathbf{X}_t)
    \;+\;\alpha(\mathbf{X}_t)\,\nabla \ln p\bigl(t,\mathbf{X}_t\bigr)\bigr]
  \,dt 
  \;+\;
  \mathbf{G}(\mathbf{X}_t)\,d\mathbf{W}_t,
\end{equation}
where \(\nabla \ln p(t,\mathbf{X}_t)\) is the \emph{score function}, and \(\alpha(\mathbf{X}_t)\) can be chosen as a state-dependent matrix (e.g.\ a diagonal function of \(\mathbf{G}\)). In particular, one may set 
\[
  \alpha(\mathbf{X}_t) \;=\; 
  \tfrac12\,\mathbf{G}(\mathbf{X}_t)\,\mathbf{G}(\mathbf{X}_t)^\top
\]
to \emph{fully} cancel the diffusion operator in the corresponding Fokker--Planck equation (under idealized conditions). However, this extreme choice is generally not mandatory; partial or approximate cancellation can also be beneficial.

\paragraph{Balancing Expansion and Contraction.}
From the perspective of the probability density \(p(t, \mathbf{x})\), each step in \eqref{eq:modified_sde_alpha} comprises:
\begin{itemize}
\item \textbf{Diffusion}, via \(\mathbf{G}\), which tends to spread or ``expand'' the distribution.
\item \textbf{Score-based drift}, via \(\alpha(\mathbf{x})\,\nabla\ln p\), which pulls trajectories toward higher-density regions, acting as an ``anti-diffusion'' force.
\end{itemize}
By tuning \(\alpha(\mathbf{x})\) appropriately, we can strike a balance between these opposing effects, preventing the distribution from diverging (if diffusion is too large) or collapsing into a narrow region (if anti-diffusion is too strong). 

\paragraph{Implications for Model Design.}
This idea underlies many score-based generative models: 
\begin{itemize}
\item \emph{Adaptive Anti-Diffusion}: Rather than using a constant \(\alpha\), we can learn a network that predicts \(\alpha(\mathbf{x})\) based on local properties of \(\mathbf{G}(\mathbf{x})\) and the data distribution. 
\item \emph{Control Over Sampling Dynamics}: By partially canceling diffusion, the model can sample more efficiently—avoiding many small, iterative denoising steps—yet still capture multi-modal or uncertain behaviors where non-zero stochasticity is essential.
\end{itemize}
Hence, the controlled interplay between diffusion and a learned score function can yield flexible, stable, and computationally efficient continuous-time modeling—particularly useful for complex or high-dimensional tasks in embodied AI and generative pipelines.



\end{document}